\newcommand{\captionfonts}{\normalsize}
\long\def\@makecaption#1#2{%
  \vskip\abovecaptionskip
  \sbox\@tempboxa{{\captionfonts #1: #2}}%
  \ifdim \wd\@tempboxa >\hsize
    {\captionfonts #1: #2\par}
  \else
    \hbox to\hsize{\hfil\box\@tempboxa\hfil}%
  \fi
  \vskip\belowcaptionskip}
\renewcommand{\fnum@algorithm}{\fname@algorithm}
\newcommand*{\rom}[1]{\expandafter\@slowromancap\romannumeral #1@}
\newtheorem{thm}{Theorem}
\newtheorem{lem}{Lemma}
\newtheorem{dfn}{Definition}
\newtheorem{prp}{Proposition}
\newtheorem*{rmk}{Remark}
\newtheorem{rmk-2}{Remark}
\newtheorem{rmk-3}{Remark}
\newtheorem{rmk-4}{Remark}
\newtheorem{rmk-5}{Remark}
\newtheorem{rmk-6}{Remark}
\newtheorem{rmk-7}{Remark}
\newtheorem{rmk-8}{Remark}
\newtheorem{cl}{Corollary}
\begin{document}
%\hspace{13.9cm}1

\ \vspace{20mm}\\

{\LARGE \flushleft On an Interpretation of ResNets via Solution Constructions}

\ \\
{\bf \large Changcun Huang}\\
{cchuang@mail.ustc.edu.cn}\\
%{Shuitu Institute of Applied Mathematics}\\
%

%\ \\[-2mm]
%{\bf Keywords:}

\thispagestyle{empty}
\markboth{}{NC instructions}
\ \vspace{-0mm}\\
%
%Abstract
\begin{center} {\bf Abstract} \end{center}
This paper first constructs a typical solution of ResNets for multi-category classifications by the principle of gate-network controls and deep-layer classifications, from which a general interpretation of the ResNet architecture is given and the performance mechanism is explained. We then use more solutions to further demonstrate the generality of that interpretation. The universal-approximation capability of ResNets is proved.

%%%%%%%%%%%

\ \\[-2mm]
{\bf Keywords:} ResNet, gate network, shortcut connection, deep layer, solution construction.

\section{Introduction}
\citet*{He2016a} introduced a type of shortcut connection in the architecture of a feedforward neural network, which has been proved effective in the learning of particularly deep neural networks. The modified architecture is called \textsl{residual network} (ResNet), which is widely applied and nearly becomes a standard component of network architectures, such as in Transformer \citep*{Vaswani2017}. Note that in \citet*{He2016b}, another proposed shortcut connection is slightly different from that of \citet*{He2016a} in whether or not a ReLU is used after an addition operation. Both of the above two versions are called \textsl{ResNet} and this paper will study the former one.

\subsection{Related Work}
The underlying rationale of shortcut connections had been investigated from several aspects. \citet*{Van Der Smagt1998} considered that the shortcut connection could solve the singularity problem of Hessian matrices for the training. \cite{Srivastava2015} used this architecture to regulate the information flow to enhance the training of deep neural networks, whose thought comes from the famous LSTM \citep*{Hochreiter1997}. Both of the above two ideas are related to the computational details of the learning.

\cite{He2016a} observed that an identity map may not be easily realized by a deep neural network, for which the shortcut connection is added to the architecture. \cite{He2016b} also analysed the error or information propagation of ResNets from the viewpoint of the training.

\citet*{Schraudolph1998} suggested that the shortcut connection is helpful to normalize the parameters; that is, remove the constant center of weights, which is supposed to accelerate the learning process. The author also said that by introducing the shortcut connection, the hidden-layer units could be freed from the responsibility of the ``linear moment'' and then concentrate on other part of error signals.

In \citet*{Huang2020}, in order to make an excluded category of a multi-category data set have zero outputs, the author introduced a subnetwork called ``T-bias'', which in fact contributes to a ResNet architecture. Thus, the universal-approximation results of \citet*{Huang2020} are also applicable to ResNets.

\citet*{Chen2018} made an analogy between ResNets and ordinary differential equations; although novel in its continuous-depth perspective, the model of \citet*{Chen2018} is not explicitly relevant to ResNets in its performing mechanism. \citet*{E2017} associated differential equations or dynamic systems with ResNets from a general viewpoint, in terms of the appearance of the mathematical expressions, but without mentioning the concrete details.

\subsection{Arrangement of this Paper}
The main purpose of this paper is to combine the deep-layer classfcation with the thought of the gate control of \citet*{Hochreiter1997} to explain the mechanism of ResNets.

Section 2 gives a model description of a ResNet. Section 3 constructs a typical solution for multi-category classifications, which will serve as the existence proof of the general conclusions of section 5. Sections 4 investigates the ResNet solution of \citet*{Huang2020} and applies its universal-approximation results to ResNets. Section 5 proposes a general interpretation of ResNets and the main results are summarized from section 3. Section 6 uses more solutions to further demonstrate the generality of the interpretation of section 5. Section 7 is the summary.

Throughout this paper, the units of a neural network are the type of rectified linear unit (ReLU); and when the output of a unit is positive, we say that it is activated. The cardinality of a data set $D$ of $n$-dimensional space is assumed to be finite.

\section{Model Description}
This section uses the gate-control idea \citep*{Hochreiter1997} to model the ResNet, on the basis of which a typical solution will be constructed in section 3.

\begin{dfn}
A ResNet block is a neural network that has at least three layers with the following constraints: The units between the input layer and the output layer are fully connected, whose links are collectively called a shortcut connection; the connections between the units of the last hidden layer and the output layer are an one-to-one correspondence. For simplicity, the abbreviated version ``block'' will also be used in this paper.
\end{dfn}

\begin{rmk}
The restriction above to the one-to-one correspondence is for both the typicalness and the simplicity of the network architecture. The reason is that by the principle of this paper, if the fully connected mode works, we can always find a solution of the former by adding a new layer; conversely, if a solution of the former exists, it can be converted to the form of the latter by setting some weight parameters to be zero.
\end{rmk}

\begin{dfn}
Denote a ResNet block by $\mathcal{B}$, whose input layer and output layer are $n$-dimensional and $m$-dimensional, respectively. We call the subnetwork of the hidden layers of $\mathcal{B}$ a gate network, which together with the input layer realize a function
\begin{equation}
f: \mathbb{R}^n \to \mathbb{R}^{m},
\end{equation}
which is called the gate function of $\mathcal{B}$. The size $m \times n$ weight matrix $\boldsymbol{W}$ of the shortcut connection is called the shortcut matrix.
\end{dfn}

\begin{figure}[!t]
\captionsetup{justification=centering}
\centering
\includegraphics[width=1.5in, trim = {4.0cm 3.2cm 6.0cm 3cm}, clip]{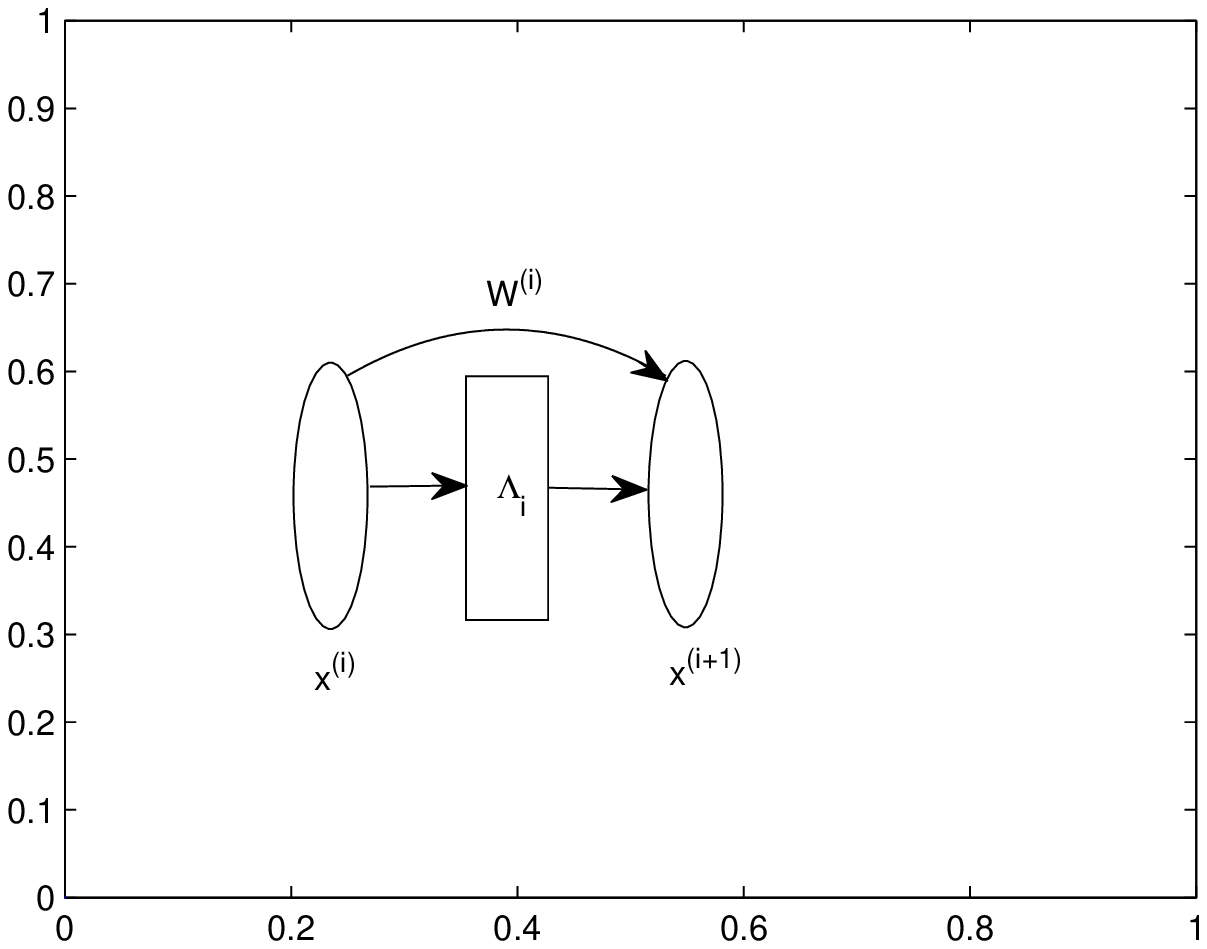}
\caption{A ResNet block.}
\label{Fig.1}
\end{figure}

\noindent
\textbf{Example 1}. Figure \ref{Fig.1} is a ResNet block. The input and output vectors are $\boldsymbol{x}^{(i)}$ and $\boldsymbol{x}^{(i+1)}$, respectively. The arc between $\boldsymbol{x}^{(i)}$ and $\boldsymbol{x}^{(i+1)}$ represents the shortcut connection, with $\boldsymbol{W}^{(i)}$ as the shortcut matrix. The square marked by $\Lambda_i$ is the gate network.

By the example above, we can represent a ResNet block by function
\begin{equation}
\boldsymbol{y} = \mathcal{F}(\boldsymbol{x}, \Lambda, \boldsymbol{\alpha}, \boldsymbol{W}, \boldsymbol{b}),
\end{equation}
where $\boldsymbol{x}$ is the $n$-dimensional input, $\boldsymbol{y}$ is the $m$-dimensional output, $\Lambda$ is the gate network, $\boldsymbol{\alpha}$ of size $m \times 1$ is the output-weight vector of $\Lambda$, $\boldsymbol{W}$ is the shortcut matrix, and $\boldsymbol{b}$ is the size $m \times 1$ bias vector of the output layer, or equivalently,
\begin{equation}
\boldsymbol{y} = \mathcal{F}(\boldsymbol{x}, f(\boldsymbol{x}), \boldsymbol{\alpha}, \boldsymbol{W}, \boldsymbol{b}),
\end{equation}
where $f(\boldsymbol{x})$ is the gate function of equation 2.1.

\vspace{3.0mm}
\noindent
\textbf{Example 2}. If $n = m$ and $\boldsymbol{W}$ is an $n \times n$ identity matrix, this is a case of \citet*{He2016a} when dimensionality augmentation is not involved in the output layer.

\begin{dfn}
We call a neural network a ResNet, if it is obtained by the concatenation of ResNet blocks $B_i$'s for $i = 1, 2,\cdots, \rho-1$ with $\rho \ge 3$, that is, the output layer of $B_i$ is the input layer of $B_{i+1}$. The input layers and output layers of $B_i$'s comprise the layers of the ResNet whose depth is $\rho$, without considering the gate networks. The width of a layer of a ResNet is the number of its units.
\end{dfn}

Let $\mathcal{R}$ be a ResNet described in definition 3. Then the depth of $\mathcal{R}$ is $\rho$, and denote by $n_j$ the number of the units of layer $j$ for $j = 1, 2, \cdots, \rho$, with $n_1 = n$ and $n_{\rho} = m$ for the input layer and output layer of $\mathcal{R}$, respectively.

To ResNet $\mathcal{R}$, the output vector $\boldsymbol{x}^{(i+1)}$ of the $i$th block $B_i$ or the $i+1$th layer can be expressed as
\begin{equation}
\boldsymbol{x}^{(i+1)} = \sigma(\boldsymbol{s}_{i+1})
\end{equation}
with
\begin{equation}
\boldsymbol{s}_{i+1} = \boldsymbol{W}^{(i)}\boldsymbol{x}^{(i)} + \boldsymbol{b}^{(i)} + \boldsymbol{\alpha}_i \circ f^{(i)}(\boldsymbol{x}^{(i)}),
\end{equation}
where matrix $\boldsymbol{W}^{(i)}$ of size $n_{i+1} \times n_{i}$ is the shortcut matrix of $B_i$, $\boldsymbol{b}^{(i)}$ is the size $n_{i+1} \times 1$ bias vector of the output layer, $f^{(i)}$ is the gate function with $\boldsymbol{\alpha}_i$ being its output-weight vector and $\circ$ denoting the Hadamard or element-wise matrix product, and
\begin{equation}
\sigma(s) = \max(0, s)
\end{equation}
is the activation function of a ReLU; when $s$ of equation 2.6 is a vector, it means that each of its entries is manipulated by the operator $\sigma$.

Using the notation of equation 2,2, equations 2.4 and 2.5 can be combined into
\begin{equation}
\boldsymbol{x}^{(i+1)} = \mathcal{F}_i(\boldsymbol{x}^{(i)}, \Lambda_i, \boldsymbol{\alpha}_i, \boldsymbol{W}^{(i)}, \boldsymbol{b}^{(i)})
\end{equation}
for $i = 1, 2, \cdots, \rho-1$, which can be regarded as the representation of the ResNet $\mathcal{R}$ above. Equation 2.7 is the model that we propose for ResNets.

\begin{dfn}
We use the notation $\prod_{\nu = 1}^{\rho}n_{\nu}$ to represent the architecture of the ResNet $\mathcal{R}$ of equations 2.7, which contains the information of the depth $\rho$ and the width $n_{\nu}$ of each layer.
\end{dfn}

\section{Typical-Solution Construction}
The solution to be constructed in this section is said to be typical in the sense that the associated gate network is the simplest case that has only one layer, and that the shortcut connection uses the simplest identity map to transmit the data. The constructed solution will be the basis of the interpretation of ResNets in later sections.

\begin{dfn}
To a ResNet block of equation 2.2 with $n$-dimensional input, if its shortcut matrix is an $n \times n$ identity one, we call it an $n$-identity block, whose input and output are both $n$-dimensional. By equations 2.4 and 2.5, an $n$-identity block can be expressed as
\begin{equation}
\boldsymbol{y} = \sigma(\boldsymbol{x} + \boldsymbol{b} +  \boldsymbol{\alpha} \circ f(\boldsymbol{x})).
\end{equation}
\end{dfn}

\vspace{3.0mm}
\noindent
\textbf{Example}. The example of Figure \ref{Fig.2} is a $2$-identity block.

\begin{figure}[!t]
\captionsetup{justification=centering}
\centering
\includegraphics[width=1.9in, trim = {4.5cm 5.0cm 4.9cm 2.5cm}, clip]{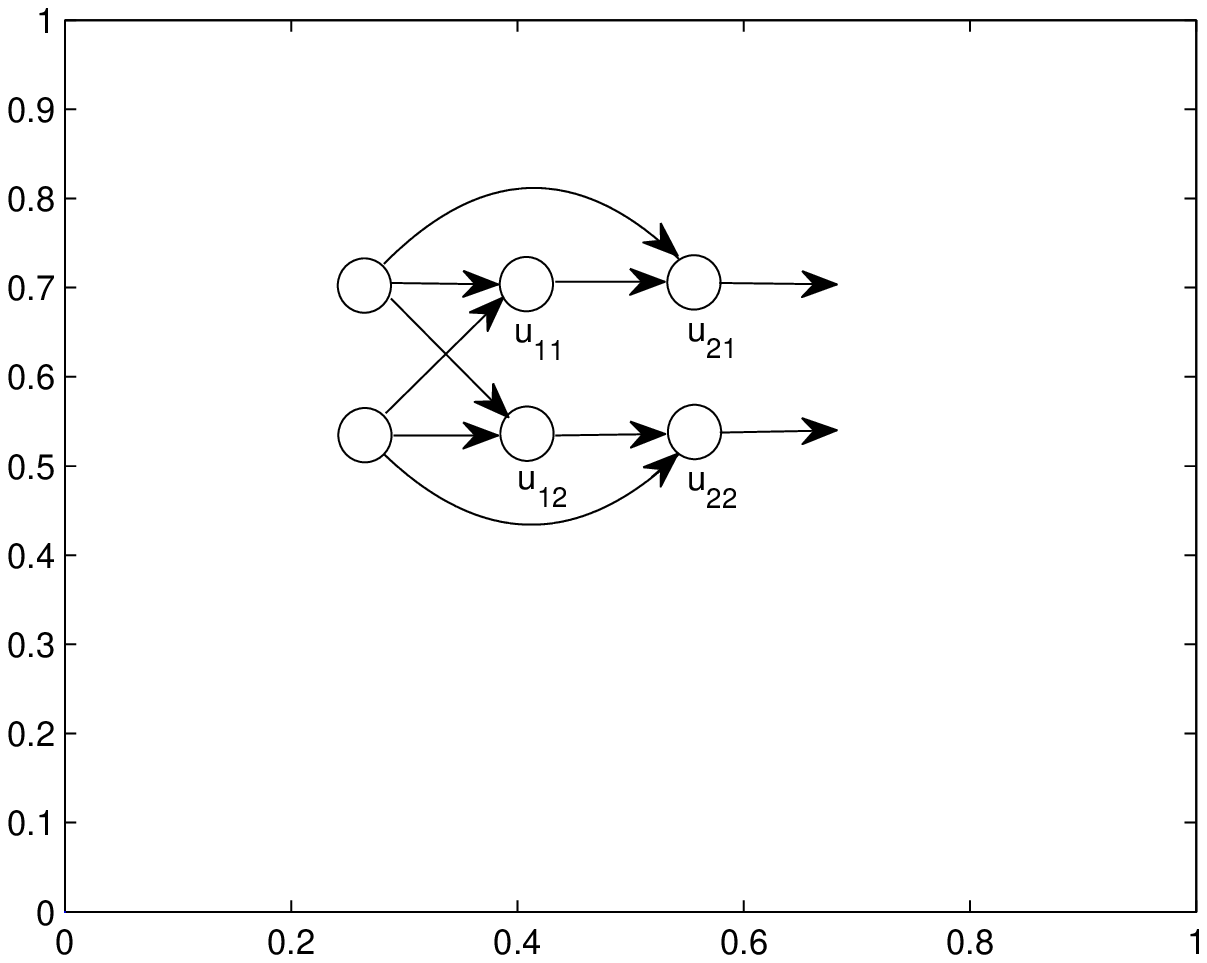}        %{4.5cm 5.0cm 3.4cm 2.0cm}
\caption{A $2$-identity block.}
\label{Fig.2}
\end{figure}

\begin{dfn}
A simplest $n$-block is an $n$-identity one whose gate network has only one layer. A ResNet obtained by the concatenation of simplest $n$-blocks as definition 3 is called a simplest ResNet.
\end{dfn}

Let $l$ be an $n-1$-dimensional hyperplane of $n$-dimensional space, corresponding to a unit of a neural network. We use the notations $l^+$ and $l^0$ to represent the two parts of $n$-dimensional space separated by $l$, whose outputs of the associated unit are positive and zero, respectively.

\begin{lem}
A simplest $n$-block can classify a linearly separable two-category data set $D$ of the $n$-dimensional input space. One of the category could pass through the block in the sense of affine transforms, and the other one could be excluded in the form of a zero-vector output.
\end{lem}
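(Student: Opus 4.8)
The plan is to build the block directly from a separating hyperplane, using the gate units to annihilate one category while letting the shortcut carry the other. Since $D$ is finite and linearly separable, there is a hyperplane $l:\ \boldsymbol{w}^{\top}\boldsymbol{x}+c=0$ and a margin $\delta>0$ such that $\boldsymbol{w}^{\top}\boldsymbol{x}+c\ge\delta$ for every $\boldsymbol{x}$ in the category $C_1$ I want to pass (so $C_1\subset l^{+}$) and $\boldsymbol{w}^{\top}\boldsymbol{x}+c\le-\delta$ for every $\boldsymbol{x}$ in the category $C_2$ I want to exclude. The finiteness of $D$ is what guarantees the strictly positive margin $\delta$, and it also gives a common bound on the coordinates of the data, which I will use when choosing the weights.

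First I would specify the one-layer gate network. Because the connections from the last hidden layer to the output layer are one-to-one, each output coordinate $j$ is fed by a single hidden unit; I let all $n$ of these hidden units carry the same parameters, so that $f_j(\boldsymbol{x})=\sigma(-\boldsymbol{w}^{\top}\boldsymbol{x}-c)$ for every $j$. With this choice the gate is silent on $C_1$: for $\boldsymbol{x}\in C_1$ we have $-\boldsymbol{w}^{\top}\boldsymbol{x}-c\le-\delta<0$, hence $f_j(\boldsymbol{x})=0$, whereas on $C_2$ it is uniformly active, $f_j(\boldsymbol{x})=\sigma(-\boldsymbol{w}^{\top}\boldsymbol{x}-c)\ge\delta>0$.

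Next I would choose the bias and the output weights of the block $\boldsymbol{y}=\sigma(\boldsymbol{x}+\boldsymbol{b}+\boldsymbol{\alpha}\circ f(\boldsymbol{x}))$ componentwise. Pick each $b_j$ large enough that $x_j+b_j>0$ for all $\boldsymbol{x}\in D$ (possible since $D$ is finite, hence bounded). Then for $\boldsymbol{x}\in C_1$ the gate term vanishes and $y_j=\sigma(x_j+b_j)=x_j+b_j$, so $\boldsymbol{y}=\boldsymbol{x}+\boldsymbol{b}$; this is an injective affine map, so $C_1$ passes through with all its information preserved. For $\boldsymbol{x}\in C_2$ I would take each $\alpha_j$ negative with $|\alpha_j|$ large, concretely $\alpha_j\le-\max_{\boldsymbol{x}\in C_2}(x_j+b_j)/\delta$, so that, using $f_j(\boldsymbol{x})\ge\delta$, the pre-activation satisfies $x_j+b_j+\alpha_j f_j(\boldsymbol{x})\le 0$ and hence $y_j=0$; thus $C_2$ is excluded as the zero vector.

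The only point that needs care, and the step I expect to be the crux, is the \emph{simultaneous} satisfaction of the two requirements with a single fixed $\boldsymbol{b}$: the pass-through condition forces $x_j+b_j>0$ on $C_1$, while the exclusion condition needs the pre-activation to be nonpositive on $C_2$. These are compatible precisely because the gate was arranged to vanish on $C_1$ and to stay bounded away from zero on $C_2$; the positive margin $\delta$ then lets me drive the pre-activation on $C_2$ negative by enlarging $|\alpha_j|$ without disturbing $C_1$ at all. Collecting the chosen $\boldsymbol{w},c,\boldsymbol{b},\boldsymbol{\alpha}$ yields a simplest $n$-block with the asserted behaviour, and swapping the roles of the two categories is obtained by reversing the orientation of $l$.
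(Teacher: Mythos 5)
Your proposal is correct and follows essentially the same route as the paper's proof: gate units whose hyperplane is silent on the pass-through category and active on the excluded one, a large positive bias $\boldsymbol{b}$ to transmit the former as the affine map $\boldsymbol{x}+\boldsymbol{b}$, and large-magnitude negative output weights $\boldsymbol{\alpha}$ (feasible by finiteness of $D$) to zero out the latter. The only difference is presentational: you work in general dimension $n$ with explicit margin and weight bounds, whereas the paper argues through a two-dimensional example and declares the general case trivial.
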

\begin{proof}
The proof begins with an example of Figure \ref{Fig.2}, which is a simplest 2-block denoted by $\mathcal{B}$, that is, a $2$-identity block whose gate network has only one layer. Let $D = D_1 \cup D_2$, where $D_1$ and $D_2$ correspond to the two categories of $D$, respectively. We want $D_1$ to be transmitted to the output and $D_2$ to be excluded. Let $l_{11}$ and $l_{12}$ be the lines ($n-1$-dimensional hyperplane with $n=2$) derived from the units $u_{11}$ and $u_{12}$ of the gate network, respectively. Construct $l_{11}$ such that $D_1 \subset l_{11}^0$ and $D_2 \subset l_{11}^+$; and similarly construct $l_{12}$ with the same classification effect as $l_{11}$, that is, $D_1 \subset l_{12}^0$ and $D_2 \subset l_{12}^+$.

Let $\boldsymbol{x}_1 \in D_1$ be an arbitrary element of $D_1$. By the construction above, the outputs of $u_{11}$ and $u_{12}$ of the gate network with respect to $\boldsymbol{x}_1$ are both zero. Then by equation 3.1, the output vector of $u_{21}$ and $u_{22}$ is $\sigma({\boldsymbol{x}_1} + \boldsymbol{b})$, where $\boldsymbol{b}$ is the bias vector of the output layer of $\mathcal{B}$. If the entries of $\boldsymbol{b}$ are all positive and large enough, then $\boldsymbol{x}_1$ could be transmitted to the output in the form of $\boldsymbol{{x}}_1 + \boldsymbol{b}$, which is an affine transform of the input $\boldsymbol{{x}}_1$. We can set the vector $\boldsymbol{b}$ such that all the elements of $D_1$ could pass through the block $\mathcal{B}$ as $\boldsymbol{{x}}_1$.

When the input is $\boldsymbol{x}_2 \in D_2$, the output of $u_{11}$ is nonzero. Set the output weight $\alpha_1$ of $u_{11}$ to be a negative number whose absolute value is large enough, and then the output of $u_{21}$ with respect to $\boldsymbol{x}_2$ could be zero. Since the cardinality $|D_2|$ is finite, we can find a value of $\alpha_1$ to fulfil all the elements of $D_2$, such that they all have zero output of $u_{21}$. The case of $u_{12}$ and $u_{22}$ is similar.

The proof of the general case of simplest $n$-blocks is trivial by the two-dimensional example above.
\end{proof}

\begin{dfn}
Data disentangling means that a linearly inseparable multi-category data set becomes linearly classified after passing through a neural network.
\end{dfn}

The term ``facet'' below can be found in \citet*{Grunbaum2003}, which is a face of an $n$-dimensional polytope with maximum dimensionality $n-1$. For example, in three-dimensional space, a facet is a two-dimensional face of a polyhedron. The facets of a polytope comprise its piecewise boundary.

The rigorous definition of an open or a closed convex polytope was given in \citet*{Huang2022}. Intuitively speaking, a closed convex polytope contains the boundary, while the open one doesn't.

\begin{prp}
Suppose that $D$ is a two-category data set of $n$-dimensional space, and that one of the category is contained in an open convex polytope $\mathcal{P}$ that the other category doesn't belong to. Then a simplest ResNet $\prod_{\nu = 1}^{\rho}n$ can disentangle $D$, with the depth $\rho$ determined by the number of the facets of $\mathcal{P}$.
\end{prp}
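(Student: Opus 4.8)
The plan is to represent the open convex polytope as an intersection of half-spaces, one per facet, and to devote one simplest $n$-block to each facet, so that a polytope with $k$ facets yields a simplest ResNet of depth $\rho = k+1$. Write $D = D_1 \cup D_2$ with $D_1 \subset \mathcal{P}$ and $D_2 \cap \mathcal{P} = \varnothing$, and let $\mathcal{P} = \bigcap_{j=1}^{k} H_j$, where each $H_j$ is the open half-space bounded by the supporting hyperplane of facet $j$. Since $D$ is finite and $\mathcal{P}$ is open, I would first record that there is a uniform margin $\delta > 0$ separating $D_1$ from every facet hyperplane and separating each point of $D_2$ from the interior across the facet that excludes it; this is exactly where openness plus finiteness is used to upgrade the boundary inequalities to strict ones.

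Next I would build the blocks inductively. For block $j$ I orient the single-layer gate unit(s) so that, in the coordinate frame reached after the previous shifts, the (translated) interior side of facet $j$ lies in $l^{0}$ and the exterior side lies in $l^{+}$. Applying the preceding lemma to this one-hyperplane, linearly separable dichotomy, the block realizes $\boldsymbol{y} = \sigma(\boldsymbol{x} + \boldsymbol{b}^{(j)} + \boldsymbol{\alpha}_j \circ f^{(j)}(\boldsymbol{x}))$ so that interior points pass as the affine image $\boldsymbol{x} + \boldsymbol{b}^{(j)}$ (gate output zero) while exterior points are forced to $\boldsymbol{0}$ by taking $\boldsymbol{\alpha}_j$ negative with large enough magnitude. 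I take every $\boldsymbol{b}^{(j)} = \boldsymbol{b}$ equal to one fixed vector with all entries large and positive: the first block thereby also lifts all of $D$ into the positive orthant, and for every surviving point thereafter $\boldsymbol{x} + \boldsymbol{b} > \boldsymbol{0}$, so the output $\sigma$ acts as the identity and the shift is a genuine translation; meanwhile the margin $\delta$ guarantees the large negative $\boldsymbol{\alpha}_j$ overpowers the bias and sends excluded points exactly to the origin. Because surviving points undergo the same translations as $D_1$, the exterior/interior status with respect to the original facet $j$ is preserved under the cumulative shift, so the gate at block $j$ is just the facet-$j$ hyperplane translated by $\boldsymbol{b}^{(1)} + \cdots + \boldsymbol{b}^{(j-1)}$.

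To finish I would run a ``racing'' argument on the accumulated shift. Every point of $D_1$ is interior to all $k$ facets, so it is never zeroed and its image is $\boldsymbol{x}_1 + k\boldsymbol{b}$. Every point of $D_2$ lies outside at least one facet and is therefore set to $\boldsymbol{0}$ at the corresponding block; after its \emph{last} zeroing at some block $t \ge 1$ it can regain a shift in at most $k - t \le k-1$ of the remaining blocks, so its final image is bounded coordinatewise by $(k-1)\boldsymbol{b}$. Since the spread of $D_1$ is bounded while $\mathbf{1}\cdot\boldsymbol{b}$ can be chosen arbitrarily large, the hyperplane $\{\boldsymbol{y} : \mathbf{1}\cdot\boldsymbol{y} = \theta\}$ with $(k-1)\,\mathbf{1}\cdot\boldsymbol{b} < \theta < k\,\mathbf{1}\cdot\boldsymbol{b} + \min_{D_1}\mathbf{1}\cdot\boldsymbol{x}_1$ separates the images of $D_1$ and $D_2$, which is precisely the disentangling asserted, with $\rho = k+1$ governed by the facet count.

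The hard part is \emph{not} the per-block step but controlling the excluded points after they are zeroed. The tempting approach of killing each $D_2$ point once and keeping it pinned at the origin fails: for a bounded polytope no single point lies on the exclusion side of all facets, so the origin is inevitably on the interior side of some later facet gate and a zeroed point can revive and drift. The resolution I rely on is the uniform-large-shift estimate above, which makes the argument robust to this drift: each $D_2$ point is guaranteed to \emph{miss at least one} accumulation step, so it can never catch up to $D_1$, no matter where it wanders after being zeroed. The remaining care-points are purely bookkeeping, namely verifying the margin $\delta$ from openness and finiteness, the identity action of the output $\sigma$ on survivors, and the translation-consistency of the gate hyperplanes.
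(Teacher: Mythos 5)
Your proof is correct and shares the paper's skeleton---one simplest $n$-block per facet, each applying Lemma 1 so that interior points pass as affine images while exterior points are zeroed, with the depth governed by the facet count---but it diverges from the paper at the one step that actually needs an argument. The paper's proof simply asserts that the classification result is ``preserved to the next block,'' i.e.\ that a point once sent to $\boldsymbol{0}$ stays excluded; you correctly observe that with a positive bias in \emph{every} block this is false, since for a bounded polytope the facet normals positively span the space, so the origin inevitably lies on the pass side $l^0$ of some later translated gate and a killed point revives as $\boldsymbol{b}$. Your racing argument---uniform bias $\boldsymbol{b}$ in all blocks, every $D_2$ point provably missing at least one of the $k$ accumulation steps (if untouched before block $j^*$ it sits at the exact translate of its original position and is zeroed there), hence a final coordinate sum of at most $(k-1)\,\mathbf{1}\cdot\boldsymbol{b}$ versus at least $k\,\mathbf{1}\cdot\boldsymbol{b}+\min_{D_1}\mathbf{1}\cdot\boldsymbol{x}_1$ for $D_1$---is sound, provided (as you implicitly do) each $\boldsymbol{\alpha}_j$ is chosen sequentially after the trajectories up to block $j$ are fixed, so that every gated trajectory point is zeroed exactly; it yields linear separability, which is all that Definition 6 of disentangling demands. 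What your route buys is a rigorous repair of a gap the paper leaves implicit; what it costs is that your excluded category does not end at the zero vector, whereas the paper's Proposition 1 feeds into Lemma 2, which explicitly relies on ``a zero vector for category $o$.'' A patch closer to the paper's intent is simpler than your racing bound: take $\boldsymbol{b}^{(1)}$ large and positive to lift all surviving data into the positive orthant and set $\boldsymbol{b}^{(j)}=\boldsymbol{0}$ for $j\ge 2$; then passing points are transmitted identically (with $\sigma$ acting as the identity on the positive orthant), and the origin becomes absorbing, since $\sigma(\boldsymbol{0}+\boldsymbol{0}+\alpha f(\boldsymbol{0}))=\boldsymbol{0}$ whether or not the gate fires, as $\alpha\le 0$; with that one change your per-facet construction recovers the paper's stronger exact-zero conclusion with no revival to control. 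Finally, your margin-$\delta$ preliminary is a genuine improvement over the paper: because $\mathcal{P}$ is open, a $D_2$ point may lie exactly on a facet hyperplane and would not fire the gate, and your inward nudge of each gate hyperplane (legitimate by finiteness of $D_1$) handles this case, which the paper's proof ignores.
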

\begin{proof}
By lemma 1, each block could realize a binary classification, and its output could preserve one of the category in terms of affine transforms and exclude the other one by zero outputs. And we can concatenate the blocks one by one, each of which corresponds to a binary classification with the classification result preserved to the next block; then any two-category data set $D$ could be classified by a simplest ResNet through recursive binary classification.

The number of the above binary classifications is determined by that of the facets of the polytope $\mathcal{P}$ and so is the depth $\rho$.
\end{proof}

\begin{figure}[!t]
\captionsetup{justification=centering}
\centering
\subfloat[Independent-module architecture.]{\includegraphics[width=2.0in, trim = {3.5cm 1.9cm 4.0cm 1.7cm}, clip]{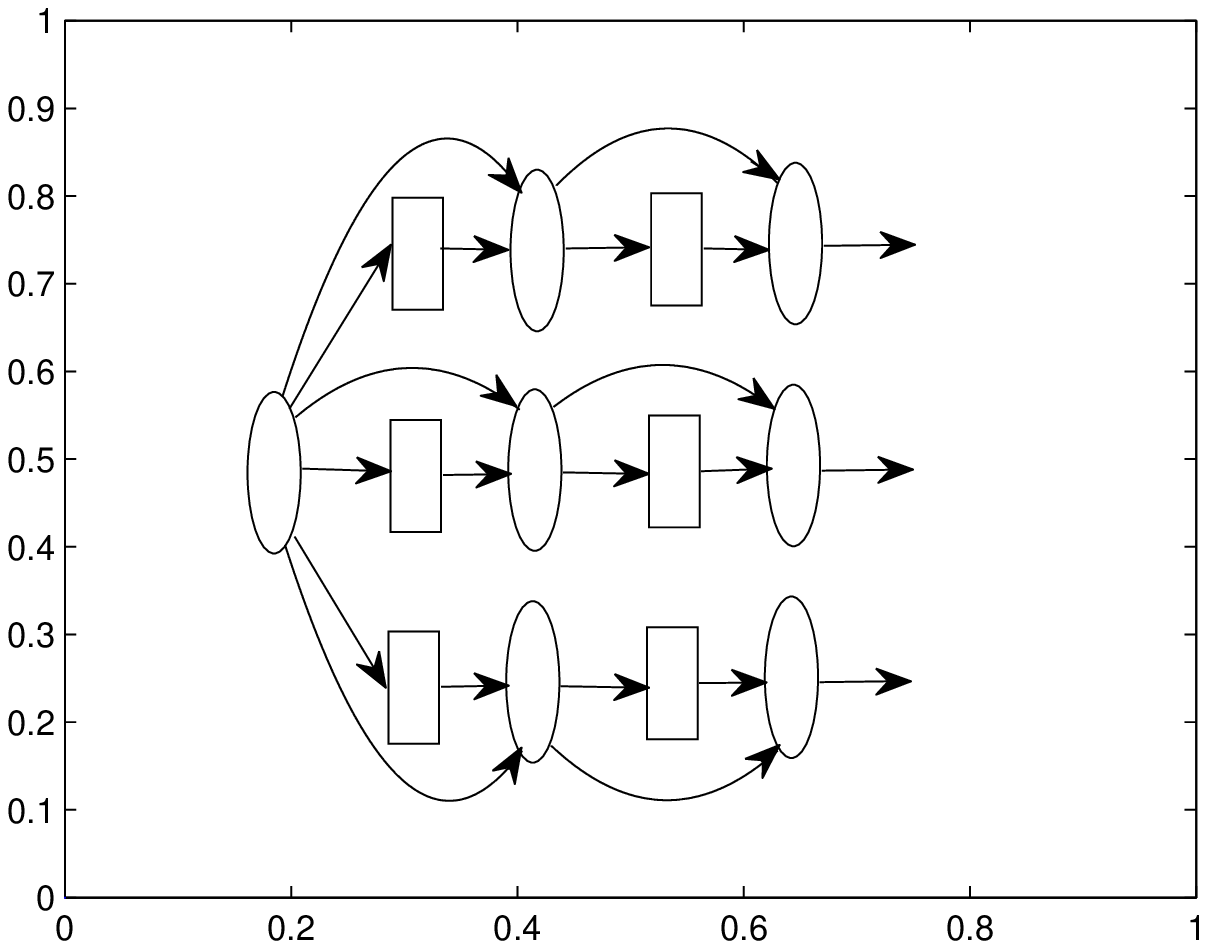}} \quad \quad \quad
\subfloat[A ResNet having projection shortcuts.]{\includegraphics[width=2.1in, trim = {3.5cm 1.9cm 2.5cm 1.7cm}, clip]{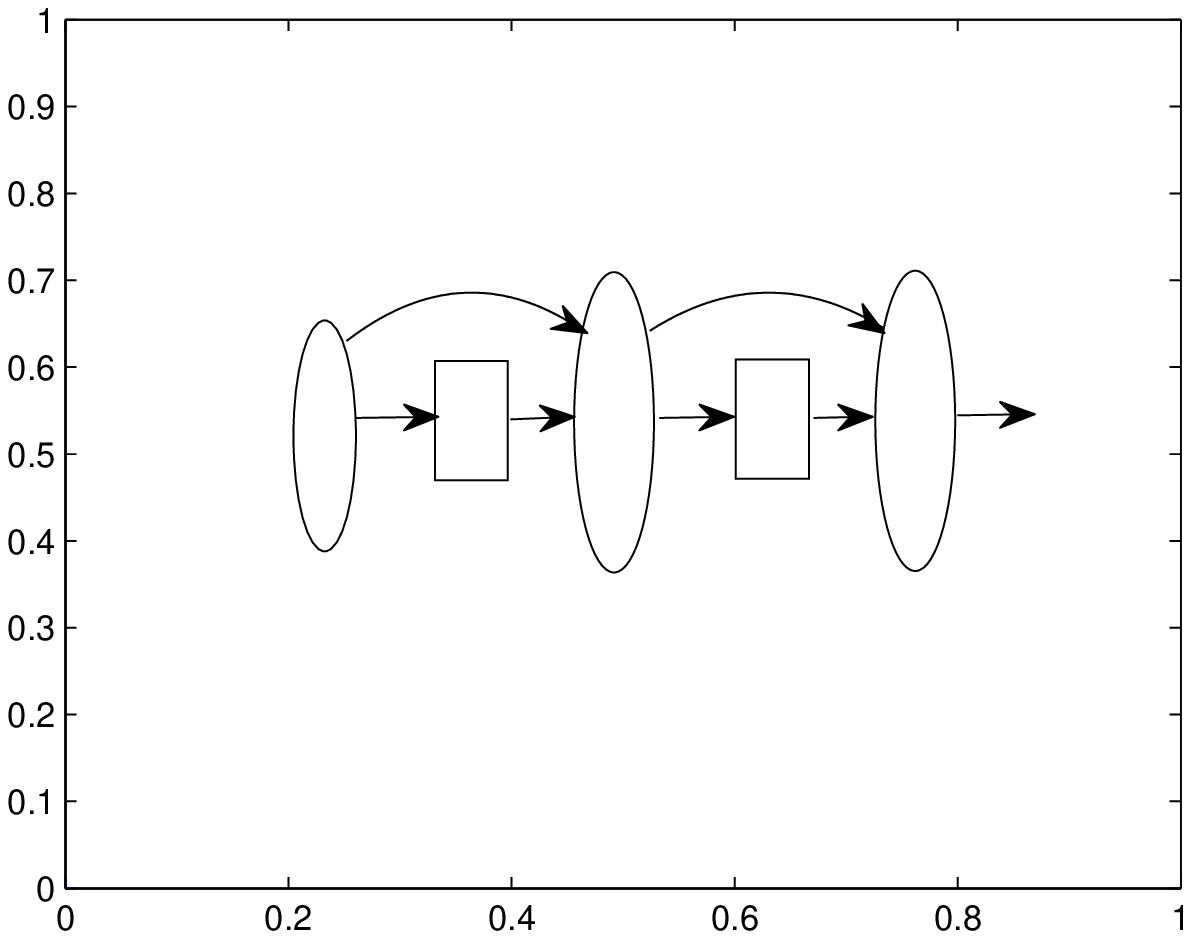}}
\caption{ResNets for multi-category classifications.}
\label{Fig.3}
\end{figure}

\begin{lem}
Let $\mathcal{N}$ be a neural network that is composed of $\nu$ simplest ResNets sharing the same $n$-dimensional input and having the same depth $\rho$. Then $\mathcal{N}$ could disentangle any two-category data set $D$ of the input space, provided that the number $\nu$ of the simplest ResNets and the depth $\rho$ of each simplest ResNet are large enough.
\end{lem}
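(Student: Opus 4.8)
The plan is to reduce the general two-category problem to the convex-polytope case already settled in proposition 1: cover the category we wish to retain by finitely many convex polytopes, assign one simplest ResNet to each, and then read off linear separability from the concatenated outputs.

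First I would fix the labelling $D = D_1 \cup D_2$ and decide to retain $D_1$. Because $D_1$ and $D_2$ are finite and disjoint, every point $p \in D_1$ lies at positive distance from $D_2$, so I can enclose $p$ in an open $n$-simplex small enough to contain $p$ but no point of $D_2$. Collecting one such simplex per point (or per cluster) of $D_1$ yields open convex polytopes $\mathcal{P}_1, \dots, \mathcal{P}_\nu$ with $D_1 \subset \bigcup_i \mathcal{P}_i$ and $\mathcal{P}_i \cap D_2 = \emptyset$ for every $i$. Using simplices is deliberate: each has exactly $n+1$ facets, so by proposition 1 the depth supplied for each polytope is the same number $\rho$, which is precisely the common depth the statement requires, and $\nu$ may be taken as large as $|D_1|$.

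Next, to each $\mathcal{P}_i$ I would apply proposition 1 to the two-category split whose first class is $D_1 \cap \mathcal{P}_i$ and whose second class is $D \setminus \mathcal{P}_i = D_2 \cup (D_1 \setminus \mathcal{P}_i)$; the first class sits inside $\mathcal{P}_i$ and the second does not, so the hypothesis is met. This produces a simplest ResNet $R_i$ of depth $\rho$ that, by lemma 1 applied block by block, transmits the points of $D_1 \cap \mathcal{P}_i$ to an output with strictly positive entries and sends every point of $D \setminus \mathcal{P}_i$ to the zero vector. Let $\mathcal{N}$ be the network whose output is the concatenation of the outputs of $R_1, \dots, R_\nu$ over the shared $n$-dimensional input; this is exactly the architecture described in the statement. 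For the disentangling itself, any $x \in D_2$ lies outside every $\mathcal{P}_i$, so each $R_i$ returns the zero vector and the whole output is $\boldsymbol{0}$; any $x \in D_1$ lies in some $\mathcal{P}_i$, so that block of the output is strictly positive while all other blocks are nonnegative, whence the coordinate sum of the output is strictly positive. Taking the hyperplane ``coordinate sum $= c$'' for a small positive $c$ below $\min_{x \in D_1}$ of these sums then separates the image of $D_1$ from the image of $D_2$, so $D$ is linearly classified after passing through $\mathcal{N}$.

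I expect the main obstacle to be confirming that each module $R_i$ realizes the \emph{exact} dichotomy strictly-positive-versus-exactly-zero uniformly across all $\rho$ blocks, since this is what makes the concatenation clean: the retained points must stay strictly positive (so the final coordinate-sum separation is valid) while the excluded points must be driven to, and held at, the zero vector despite the positive biases that later blocks add. The affine transform contributed by one block also translates the polytope seen by the next, so the facet constructions of proposition 1 must be read in the translated coordinates rather than the original ones. Once that propagation of the dichotomy is granted (it is inherited from the constructions of lemma 1 and proposition 1), the covering step, the equalization of depths via the simplex choice, and the combination into $\mathcal{N}$ are routine, and the required largeness of $\nu$ and $\rho$ is just the number of covering simplices and their common facet-determined depth.
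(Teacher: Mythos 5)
Your proposal is correct and takes essentially the same route as the paper's own proof: decompose the retained category into finitely many open convex polytopes that avoid the other category, apply proposition 1 to get one simplest ResNet per polytope (with points of the retained category outside $\mathcal{P}_i$ assigned to the excluded class of $R_i$), and disentangle with a final summation unit thresholding positive against zero outputs. The only cosmetic difference is how the common depth $\rho$ is arranged --- you cover with simplices so every polytope has $n+1$ facets and the depths agree automatically, whereas the paper pads the shorter simplest ResNets with redundant blocks that preserve the previous classification result; both devices are valid.
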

\begin{proof}
Figure \ref{Fig.3}a is an example of network $\mathcal{N}$ with three simplest ResNets whose depths are all three. If the condition of proposition 1 is not satisfied, decompose one of its category (denoted by $*$) into $\nu$ subsets, such that each of them could be in an open convex polytope $p_i$ for $i = 1, 2, \cdots, \nu$, without containing the points of the other category (denoted by $o$).

Each simplest ResNet $R_i$ for $i = 1, 2, \cdots, \nu$ classify the $i$th subset of category $*$ by the method of proposition 1. If the depths of $R_i$'s are different, use the maximum one $\rho$; the remaining simplest ResNets add redundant blocks having the same classification result as the previous one to make the depth equal to $\rho$. Then each $R_i$ with the same depth $\rho$ would output a positive-entry vector for category $*$ and a zero vector for category $o$.

If we add a unit in a new layer of $\mathcal{N}$, set its input weights to be 1 and the bias to be 0, then it could produce positive and zero outputs for the categories $*$ and $o$, respectively. That is, $D$ is disentangled in the output layer of $\mathcal{N}$.

By the construction process, we see that the depth $\rho$ of the network $\mathcal{N}$ is determined by the number of the facets of each convex polytope $p_i$; and the width is associated with the number $\nu$ of $p_i$'s, which is equal to that of the simplest ResNets $R_i$'s. Both of the above two parameters of $\mathcal{N}$ depend on the input-data structure for classifications.
\end{proof}

\begin{prp}
The network architecture $\mathcal{N}$ proposed in lemma 2 could disentangle any multi-category data set $D$ of the $n$-dimensional input space, if the depth $\rho$ and the width-associated parameter $\nu$ are sufficiently large.
\end{prp}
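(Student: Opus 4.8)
The plan is to reduce the multi-category problem to a family of two-category problems, apply lemma 2 to each, and then assemble the resulting subnetworks into a single network of the form $\mathcal{N}$. Concretely, suppose $D = \bigcup_{c=1}^{\kappa} D_c$ consists of $\kappa$ categories. For each fixed category $c$ I would regard $D_c$ as the category $*$ and the union $\bigcup_{c' \ne c} D_{c'}$ as the single opposing category $o$, thereby obtaining a two-category data set $D^{(c)}$ over the same $n$-dimensional input space. Since $D$ is finite, each $D^{(c)}$ is a finite two-category set, so lemma 2 applies directly and furnishes a subnetwork $\mathcal{N}_c$ — itself built from $\nu_c$ simplest ResNets sharing the common $n$-dimensional input — that disentangles $D^{(c)}$ and, following the final construction step in the proof of lemma 2, terminates in a single unit whose output is strictly positive on $D_c$ and zero on every other category. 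Note that the internal splitting of a category into convex-polytope pieces is already handled inside lemma 2, so I may invoke it as a black box.

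Next I would merge the $\kappa$ subnetworks into one network $\mathcal{N}$ in the independent-module style of figure~\ref{Fig.3}(a). Because all $\mathcal{N}_c$ read from the same input layer, they can be run in parallel on the common input, kept in disjoint coordinate blocks of each layer so that the identity shortcut matrices act blockwise and the streams do not interfere. To meet the uniform-depth requirement I set $\rho = \max_c \rho_c$ and pad each shorter subnetwork with redundant identity blocks whose gate weights reproduce the classification result of the preceding block, exactly as in the depth-equalizing argument of lemma 2. The width-associated parameter of $\mathcal{N}$ is then $\nu = \sum_{c=1}^{\kappa} \nu_c$, and both $\rho$ and $\nu$ are finite, determined by the geometric complexity of the per-category decompositions.

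Finally I would collect the $\kappa$ terminal units into the output layer of $\mathcal{N}$, obtaining an output map under which each point of $D_c$ is sent to a vector whose $c$-th coordinate is strictly positive and whose remaining coordinates vanish. Such one-hot-type images are pairwise linearly separable — the hyperplane isolating the $c$-th coordinate separates $D_c$ from all other categories — so $D$ is linearly classified in the output layer, which is precisely the disentangling demanded by definition 7.

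The conceptual content here is light: the one-versus-rest reduction and the observation that one-hot outputs are separable are both routine. The main obstacle I anticipate is the bookkeeping of the assembly. One must verify that the redundant-block padding preserves each subnetwork's outputs on both $*$ and $o$ while holding all $\kappa$ streams synchronized at depth $\rho$, and that the large bias and negative gate-weight magnitudes required by lemma 1 can be chosen uniformly across all streams at once. I would resolve this by exploiting the finiteness of $D$, which guarantees a single sufficiently large choice of these magnitudes serving every stream simultaneously, and by the blockwise-disjoint layout, which reduces the shared identity shortcuts to independent per-module identities.
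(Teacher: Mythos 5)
Your proposal is correct and follows essentially the same route as the paper: a one-versus-rest reduction in which each category plays the role of $*$ (the remaining $k-1$ categories merged into $o$), extra simplest ResNets increasing $\nu$, depth equalization by redundant blocks as in lemma 2, and a final layer of per-category units whose one-hot-type outputs disentangle $D$. The only cosmetic difference is that you invoke lemma 2's terminal aggregation unit as a black box and then collect these terminal units, whereas the paper wires each final unit $u_i$ directly (weight 1 to the units with nonzero output for category $i$, all other weights and the bias 0) --- the resulting construction is the same.
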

\begin{proof}
Suppose that $D$ has $k$ categories, and each of them is dealt with analogous to category $*$ of the proof of lemma 2 (with other $k-1$ categories combined to be category $o$), through adding simplest ResNets that can increase the width-associated parameter $\nu$ of $\mathcal{N}$. The depth $\rho$ is then obtained similarly to that of lemma 2. We show that the constructed $\mathcal{N}$ could disentangle $D$.

In a new layer of $\mathcal{N}$, to each category $i$ for $i = 1, 2, \cdots, k$, add a unit $u_i$ to classify it, whose parameters are set as follows. The weights of the connections between $u_i$ and the units that have nonzero output with respect to category $i$ are set to be 1, and the remaining weights together with the bias are set to be 0. Then $u_i$ could only output nonzero value for category $i$, which means that $D$ has been disentangled by $\mathcal{N}$.
\end{proof}

We directly borrow some terminology from literature \citet*{He2016a} and give them formal definitions here.
\begin{dfn}
To a shortcut connection, if its shortcut matrix is an identity one, we call it an identity shortcut; otherwise, it is a projection shortcut, with the corresponding shortcut matrix called a projection matrix.
\end{dfn}

\noindent
\textbf{Example}. In Figure \ref{Fig.3}b, the shortcut connection between the first two layers is an example of projection shortcuts, where the different size of the ellipses signifies the unequal number of units.

\begin{thm}
A ResNet $\mathcal{R}$ can classify any multi-category data set $D$ of the input space, provided that the widths and the depth of $\mathcal{R}$ are sufficiently large.
\end{thm}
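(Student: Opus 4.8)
The plan is to obtain the theorem as a repackaging of proposition 2, whose independent-module architecture $\mathcal{N}$ already disentangles any multi-category data set $D$. Since disentangling (definition 5) means $D$ becomes linearly classified, the only two things left to establish are: first, that $\mathcal{N}$ can be realized as a bona fide ResNet $\mathcal{R}$ in the sense of definition 3 (a single chain of blocks), rather than a collection of parallel simplest ResNets; and second, that linear separability in the disentangled layer is promoted to an actual classification. The latter is essentially free—the units $u_i$ constructed in the proof of proposition 2 already fire only for their own category, giving a one-hot-type output—so the real content is the first point, and the bridging device will be the projection shortcut of definition 6, exactly as suggested by Figure \ref{Fig.3}b.

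First I would recast the $\nu$ parallel simplest ResNets of $\mathcal{N}$, all of common depth $\rho$, as a single sequence of wide blocks: at each depth, concatenate the corresponding layers of all $\nu$ modules into one wide layer. Because the modules never interact, the combined shortcut matrix at each internal transition is block diagonal with identity blocks, hence an identity matrix of the enlarged dimension; by the remark following definition 1 the ambient fully connected shortcut is compatible with this by setting all cross-module weights to zero, so these internal transitions are legitimate identity shortcuts. The one transition that is genuinely non-identity is the first: the shared $n$-dimensional input must be broadcast into $\nu$ copies, which I would implement by a projection shortcut whose shortcut matrix stacks $\nu$ copies of the $n\times n$ identity. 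This is precisely the projection-shortcut situation pictured in Figure \ref{Fig.3}b, where the unequal ellipse sizes signal the width increase.

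Next I would attach the classification layer. The units $u_i$ of proposition 2 collapse the wide disentangled representation down to $k$ outputs, a width decrease again handled by a ResNet block with a projection shortcut (the cross weights being zeroed as before). The resulting network is then a concatenation of blocks whose output layers feed the next input layers, so it satisfies definition 3 and is a ResNet $\mathcal{R}$; its depth is the $\rho$ of lemma 2 augmented by the input-broadcast and output-classification transitions, and its width is the sum of the module widths plus the $k$ classification units—both finite and determined by the data structure, as required by the phrase ``sufficiently large.'' Invoking proposition 2 for the classification guarantee completes the argument.

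I expect the main obstacle to be the faithful embedding of the independent-module topology into the strictly sequential, fully connected block structure demanded by definition 3 while honoring its one-to-one last-hidden-to-output correspondence. The crux is verifying that the projection shortcuts used to broadcast the input and to aggregate at the output, together with the zeroing of cross-module weights, simultaneously keep the $\nu$ modules mutually non-interfering and keep each block well formed. Once this bookkeeping is in place, the promotion from ``disentangled'' to ``classified'' is routine, since a one-hot output has already been produced.
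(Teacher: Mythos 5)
Your proposal is correct and follows essentially the same route as the paper: the paper's proof of this theorem is exactly the conversion of the independent-module architecture of Figure \ref{Fig.3}a (from proposition 2) into the ResNet of Figure \ref{Fig.3}b with projection shortcuts, achieved by zeroing the cross-module weight parameters. Your write-up merely supplies, correctly, the bookkeeping (block-diagonal identity shortcuts internally, stacked-identity projection at the input broadcast, a projection-shortcut block for the classification units) that the paper dismisses as ``trivial.''
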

\begin{proof}
In the example of Figure \ref{Fig.3}a of proposition 2, the network architecture can be converted to that of Figure \ref{Fig.3}b, which is a ResNet (denoted by $R$) having projection shortcuts. The method is trivial by setting some of the weight parameters of $R$ (including the gate networks) to be zero. The general case is similar.
\end{proof}

\section{ResNet Solution of \citet*{Huang2020}}
The solutions of ReLU networks for function approximations in \citet*{Huang2020} are expressed in the form of a subnetwork called ``T-bias'', which in fact contributes to a type of ResNet architecture. The result of this section is somewhat a by-product of the architecture-feature examination, instead of particular consideration. Compared to related works \citet*{Lin2018} and \citet*{Aizawa2020}, our proof and network architecture are different from theirs.

\begin{thm}
ResNets are universal approximators via piecewise linear or constant functions.
\end{thm}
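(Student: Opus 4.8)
The plan is to obtain the theorem as a corollary of the universal-approximation results of \citet*{Huang2020}, by recognizing that the ReLU networks constructed there are already ResNets in the sense of Definition 3. First I would fix the approximation target: since any continuous function on a compact domain is uniformly approximable by piecewise linear (or piecewise constant) functions, it suffices to show that ResNets realize such functions. The work of \citet*{Huang2020} establishes exactly this for ordinary ReLU networks, the crucial device being the ``T-bias'' subnetwork that forces an excluded category to output the zero vector.

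The crux of the argument is to exhibit the T-bias construction as a ResNet architecture. A T-bias adds a data-dependent correction $f(\boldsymbol{x})$, computed by a hidden subnetwork, to the signal carried along a direct link from the input layer to the output layer. Comparing with equations 2.4 and 2.5, the direct link is precisely the shortcut connection with matrix $\boldsymbol{W}$, the T-bias subnetwork is precisely the gate network $\Lambda$ with gate function $f$, and the pre-activation sum is exactly $\sigma(\boldsymbol{W}\boldsymbol{x} + \boldsymbol{b} + \boldsymbol{\alpha} \circ f(\boldsymbol{x}))$. Hence each T-bias module is a ResNet block, and the concatenation of these modules through the depth of the original network is a ResNet. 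Any remaining mismatch with the formal block wiring of Definition 1 --- the one-to-one correspondence between the last hidden layer and the output layer, or the full connectivity of the shortcut --- would be removed exactly as in the proof of Theorem 1, either by inserting an auxiliary layer or by zeroing selected weights, a conversion whose validity is guaranteed by the Remark following Definition 1.

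The main obstacle I anticipate is this second step: checking that the T-bias subnetwork of \citet*{Huang2020}, together with its input-to-output link, matches the block model of equations 2.4--2.7 on the nose, and in particular that the shortcut matrix can be taken in identity or projection form and that the weight vector $\boldsymbol{\alpha}$ and bias $\boldsymbol{b}$ admit the required shapes. Once this structural identification is secured, the universal-approximation property transfers from the ReLU network to the ResNet with no additional approximation argument needed.
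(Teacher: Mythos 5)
Your proposal is correct and follows essentially the same route as the paper's own proof: identifying the T-bias subnetworks of \citet*{Huang2020} as gate networks and the inter-layer links as shortcut connections, then repairing the architectural mismatches (the shared T-bias versus the one-to-one correspondence of Definition 1, and the independent-module layout) by the weight-zeroing conversion from the proof of Theorem 1, so that the universal-approximation results of \citet*{Huang2020} transfer directly. The paper's proof adds only the minor detail that the output-layer biases are fixed to zero in this identification, which your structural matching of equations 2.4--2.5 implicitly accommodates.
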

\begin{proof}
On the basis of section 3, we can easily interpret the solution of \citet*{Huang2020} by the perspective of ResNets. In \citet*{Huang2020}, the T-bias subnetworks could make up the gate network of a ResNet block, and the links between adjacent layers can be regarded as the shortcut connection; the biases are fixed to be 0.

Note that in the case such as Figure 7 of \citet*{Huang2020}, all the units of a layer share the same T-bias, which is different from the architecture of the ResNet block of definition 1 that has an one-to-one correspondence mode. However, it is trivial to modify the architecture of \citet*{Huang2020} such that each unit has its own T-bias, which could be the case of this paper.

We also need to change the independent-module architecture of \citet*{Huang2020} into the form of ResNets as in the proof of theorem 1, and the method is similar, by setting some of the weights of a ResNet to be zero.

Therefore, we could immediately apply the results of \citet*{Huang2020} to ResNets in the general form of this theorem.
\end{proof}

\begin{rmk-2}
The mechanism of the activation of units in \citet*{Huang2020} is different from that of lemma 1 of this paper, which suggests the diversity of the solutions of ResNets.
\end{rmk-2}

\begin{rmk-2}
The introduction of a T-bias subnetwork of \citet*{Huang2020} is due to the principle of certain solution constructions, which happens to yield the characteristic of a ResNet architecture. This may demonstrate the fundamentality of the mechanism of ResNets.
\end{rmk-2}

\section{General Interpretation}
We draw an interpretation of ResNets from section 3. The constructed solution of section 3 can be regarded as the existence proof of the results of this section, for which the proofs are all omitted.

The three following conclusions are immediately derived from theorem 1, which are some general descriptions of the concrete constructed solution.
\begin{cl}[Effect of gate networks]
To a ResNet block, the gate network could control the activation of a unit of the output layer for different categories of a multi-category input data set.
\end{cl}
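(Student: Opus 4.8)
The plan is to read the control mechanism straight off the block equations 2.4 and 2.5, treating the construction of section 3 as the promised existence proof. I would fix a single output unit, say the $j$th unit of a block, and record its pre-activation from equation 2.5 as
\[
s_j = (\boldsymbol{W}\boldsymbol{x})_j + b_j + \alpha_j\, f_j(\boldsymbol{x}),
\]
so that its output equals $\sigma(s_j)$. Here the first two terms are the contribution of the shortcut path and $\alpha_j f_j(\boldsymbol{x})$ is the contribution of the gate network. The entire argument rests on isolating this last term as the adjustable control knob, whose sign and magnitude are fixed jointly by the gate value $f_j(\boldsymbol{x})$ and the output weight $\alpha_j$.

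Next I would invoke the separation property established in the proof of lemma 1. Arranging the gate units so that $f_j$ vanishes on one category and is strictly positive on another, the gate term contributes nothing on the first category, and the unit is then activated purely through the suitably biased shortcut; on the second category a sufficiently negative $\alpha_j$ drives $s_j$ below zero, so the ReLU returns a zero output. This is exactly the assertion that, with the shortcut held fixed, the gate network alone decides whether the output unit fires, and that this decision may differ from one category to the next. To pass from two categories to a general multi-category data set I would appeal to theorem 1, whose construction already assembles the blocks so that each output unit is turned on for its designated category and silenced for the remaining ones, the silencing being effected by precisely the negative gate weights above.

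The main obstacle is conceptual rather than computational: one must guarantee that the gate function can separate the categories finely enough that the activation of a single output unit can be prescribed independently for each category. This is supplied by the finite-cardinality argument of lemma 1, where one large-magnitude $\alpha_j$ suffices to silence all finitely many points of the excluded category, together with the recursive composition of proposition 1 and theorem 1. Once these are in hand the corollary is immediate from the single-unit reading of equation 2.5 above, which is why a separate proof may indeed be omitted.
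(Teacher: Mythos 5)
Your proposal is correct and follows essentially the same route as the paper: the paper derives this corollary from Theorem 1 with the Section 3 construction serving as the existence proof, and its own formula-level discussion (equations 5.1--5.2, read through the proof of Lemma 1) isolates the gate term $\alpha_j f_j(\boldsymbol{x})$ exactly as you do, with zero gate output plus positive bias for transmission and a large-magnitude negative output weight for silencing. Your single-unit reading of equation 2.5 and the appeal to the finite-cardinality argument of Lemma 1 match the paper's intended justification point for point.
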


\begin{cl}[Effect of shortcut connections]
The shortcut connections of a ResNet could directly transmit the data of the current layer to the next layer in the sense of affine transforms. And in the process of controlling the activation of a unit of the next layer, the data to be transmitted by shortcut connections can be a reference or a restriction that decides whether or not the unit is activated by the corresponding gate network.
\end{cl}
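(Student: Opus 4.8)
The plan is to obtain both assertions directly from the single-block update rule already established in lemma 1 and propagated through theorem 1, by reading off the distinct roles of the two summands in the pre-activation. Recall from equation 2.5 that, for the $i$th block,
\begin{equation}
\boldsymbol{s}_{i+1} = \underbrace{\boldsymbol{W}^{(i)}\boldsymbol{x}^{(i)} + \boldsymbol{b}^{(i)}}_{\text{shortcut part}} + \underbrace{\boldsymbol{\alpha}_i \circ f^{(i)}(\boldsymbol{x}^{(i)})}_{\text{gate part}},
\end{equation}
with $\boldsymbol{x}^{(i+1)} = \sigma(\boldsymbol{s}_{i+1})$, which specializes to equation 3.1 for an identity block. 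The whole argument amounts to analysing these two summands first separately and then jointly.

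For the first assertion I would argue as in the $D_1$ case of lemma 1. When the gate network is silent on an input, i.e. $f^{(i)}(\boldsymbol{x}^{(i)}) = \boldsymbol{0}$ as the construction of lemma 1 arranges for the category to be preserved, the pre-activation collapses to the affine map $\boldsymbol{W}^{(i)}\boldsymbol{x}^{(i)} + \boldsymbol{b}^{(i)}$. Choosing the bias $\boldsymbol{b}^{(i)}$ with sufficiently large positive entries, which is possible because $D$ is finite and hence bounded, keeps every coordinate of this affine value positive, so $\sigma$ acts as the identity on it and the data is transmitted as a genuine affine transform. This is exactly the passage of $D_1$ through the block in lemma 1 and establishes the claim of direct transmission in the sense of affine transforms.

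For the second assertion I would make the informal notion of ``reference or restriction'' precise as follows. Whether the $j$th unit of the next layer is activated is decided by the sign of the $j$th entry of $\boldsymbol{s}_{i+1}$, and the set of gate outputs that deactivate it is the half-line
\begin{equation}
\bigl(\boldsymbol{\alpha}_i \circ f^{(i)}(\boldsymbol{x}^{(i)})\bigr)_j \le -\bigl(\boldsymbol{W}^{(i)}\boldsymbol{x}^{(i)} + \boldsymbol{b}^{(i)}\bigr)_j,
\end{equation}
whose threshold is precisely the shortcut-transmitted value. Thus the affine datum carried by the shortcut fixes the baseline that the gate contribution must overcome: it is a reference against which the gate's decision is measured, and a restriction in that a larger transmitted value forces a more negative gate output to suppress activation. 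The $D_2$ case of lemma 1 is the extreme instance, where a large-enough negative output weight $\boldsymbol{\alpha}_i$ drives the entry below this threshold for every point of the excluded category and yields a zero output; since $|D|$ is finite, one choice of $\boldsymbol{\alpha}_i$ serves uniformly.

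I expect the main obstacle to be conceptual rather than computational: the statement is an interpretive corollary, so the work lies in pinning down the words ``reference'' and ``restriction'' by the explicit threshold above, and in checking that the conversion to projection shortcuts leaves the reading intact. The latter is handled by theorem 1, which embeds any identity-shortcut construction into a ResNet with projection matrices $\boldsymbol{W}^{(i)}$ by setting surplus weights to zero; the splitting of $\boldsymbol{s}_{i+1}$ into a shortcut part and a gate part is unaffected, so both assertions transfer verbatim. The only quantitative content, namely the uniform choice of $\boldsymbol{b}^{(i)}$ and $\boldsymbol{\alpha}_i$ over the finite data set, is already discharged in lemma 1, so no further estimates are required.
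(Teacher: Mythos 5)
Your proposal is correct and takes essentially the same approach as the paper: the paper treats this corollary as immediately derived from the constructed solution of section~3 (lemma~1 and theorem~1) with the formal proof omitted, and its accompanying discussion via equations~5.1 and~5.2 reads off exactly the two roles of the summands in the pre-activation that you isolate (gate output zero plus positive bias for affine transmission; nonzero gate output with a large negative weight $\alpha$ for exclusion). Your explicit threshold inequality is merely a formalized restatement of the paper's verbal account, not a different route.
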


From the perspective of lemma 1, we describe corollaries 1 and 2 in terms of formulas in more details. In what follows, the term \textsl{usual unit} refers to a unit without shortcut-connection inputs. The activation function of a usual unit can be expressed as
\begin{equation}
y = \sigma(\boldsymbol{x}, \boldsymbol{w}, b),
\end{equation}
where $\boldsymbol{x}$ is the input, and $\boldsymbol{w}$ and $b$ are the weight vector and bias, respectively. Correspondingly, by the model of equation 2.7, the case of a ResNet-unit $u_j$ of a ResNet block is
\begin{equation}
y = \sigma(\boldsymbol{x}, f_j(\boldsymbol{x}), \alpha, \boldsymbol{w}_j, b),
\end{equation}
where $f_j(\boldsymbol{x})$ is the $j$th dimension of the gate function for $u_j$, $\alpha$ is the output weight of $f_j(\boldsymbol{x})$, $\boldsymbol{w}_j$ is the $j$th row of the shortcut matrix associated with $u_j$, and others are similar to those of equation 5.1.

By the proof of lemma 1, if we want the shortcut connection to transmit one dimension $x_j$ of the input $\boldsymbol{x}$ (or its affine transform, and similarly for the later case) by unit $u_j$, the output $f_j(\boldsymbol{x})$ of the gate network should be zero and we then use the positive bias $b$ to ensure that $u_j$ output $x_j + b$, which is one dimension of the affine-transform $\boldsymbol{x} + \boldsymbol{b}$ of input $\boldsymbol{x}$.

If we want $x_j$ to be excluded by $u_j$, the gate-network output $f_j(\boldsymbol{x})$ should be nonzero, and simultaneously the output weight $\alpha$ of $f_j(\boldsymbol{x})$ is set to be a negative number whose absolute value is large enough, through which the output of $u_j$ could be zero.

Therefore, the main difference of equation 5.2 of ResNets from equation 5.1 of usual networks without shortcut connections is the use of a gate network to control the activation of a unit according to the input. The corollary below further emphasizes this distinction from the perspective of the division of the network architecture in the performing mechanism.

\begin{cl}[Architecture-division principle]
Through separate architectures, which are called the shortcut connection and the gate network, respectively, a ResNet can transmit the input data in terms of affine transforms and simultaneously control the activation of a unit according to the input.
\end{cl}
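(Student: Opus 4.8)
The plan is to treat Corollary 3 as a direct consequence of the model in equation 2.7 together with the construction used in the proof of lemma 1; the work is to make the architectural separation explicit rather than to establish a new quantitative fact. First I would unfold the ReLU expression of equation 5.2 by means of the model of equation 2.7, writing the pre-activation of the ResNet-unit $u_j$ as the sum of a shortcut term and a gate term,
\begin{equation}
s_j = \boldsymbol{w}_j \cdot \boldsymbol{x} + b + \alpha f_j(\boldsymbol{x}),
\end{equation}
so that $y = \sigma(s_j)$. The decomposition is the crux: the first two summands $\boldsymbol{w}_j \cdot \boldsymbol{x} + b$ depend only on the shortcut matrix and the output bias, whereas the last summand $\alpha f_j(\boldsymbol{x})$ is produced entirely within the gate network of definition 2. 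Since these are, by construction, physically distinct subnetworks, the additive split already exhibits the claimed division of the architecture.

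Next I would check that each summand realizes its asserted function. For the transmission claim I would invoke the first half of the lemma 1 construction: choosing the gate weights so that $f_j(\boldsymbol{x}) = 0$ on the category to be preserved forces $s_j = \boldsymbol{w}_j \cdot \boldsymbol{x} + b$, and with $\boldsymbol{w}_j$ a row of an identity (or projection) shortcut and $b$ large enough to keep $s_j$ positive, the unit outputs $x_j + b$, an affine transform of the input carried purely by the shortcut connection. For the activation-control claim I would invoke the second half: on the category to be excluded one arranges $f_j(\boldsymbol{x}) \neq 0$ and picks $\alpha$ a negative number of large enough magnitude, which drives $s_j$ below zero and yields $y = 0$. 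Here the gate network alone decides activation, the shortcut data serving only as the reference level that $\alpha f_j(\boldsymbol{x})$ must overcome, in the sense of corollary 2. Because the input data set is finite (as assumed throughout), a single such $\alpha$ suffices for all excluded points, exactly as in lemma 1.

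Finally I would observe that the two behaviors occur simultaneously within one unit: for a given input, the shortcut term and the gate term are evaluated in parallel and added before the ReLU is applied, so transmission and control are concurrent contributions rather than sequential stages. Since theorem 1 guarantees that a ResNet assembled from such blocks classifies an arbitrary multi-category data set, the construction of section 3 is itself the existence witness, and the corollary follows without further computation.

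I expect the only real obstacle to be expository rather than technical: the statement is an interpretation of a mechanism already built in section 3, so the difficulty lies in isolating precisely which architectural component is responsible for each of the two simultaneous roles, and in phrasing the ``reference or restriction'' of corollary 2 so that it coincides with the additive decomposition above instead of introducing a new notion.
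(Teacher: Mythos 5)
Your proposal is correct and takes essentially the same route as the paper: the paper formally omits the proof, citing the section-3 construction as the existence witness, and its surrounding discussion of equations 5.1 and 5.2 is precisely your additive split of the pre-activation into the shortcut term $\boldsymbol{w}_j \cdot \boldsymbol{x} + b$ and the gate term $\alpha f_j(\boldsymbol{x})$, with the two cases of lemma 1 ($f_j(\boldsymbol{x})=0$ plus a large positive bias for transmission; $f_j(\boldsymbol{x})\neq 0$ plus a large-magnitude negative $\alpha$ for exclusion). Your explicit decomposition via equation 2.5 merely makes visible what the paper's notation $\sigma(\boldsymbol{x}, f_j(\boldsymbol{x}), \alpha, \boldsymbol{w}_j, b)$ leaves implicit, so there is no substantive difference.
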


\begin{rmk-3}
In comparison with a ResNet, other feedforward neural networks realize the above two operations by one shared architecture, and the solution is determined by the parameter settings of that single architecture.
\end{rmk-3}

\begin{rmk-3}
Corollaries 1, 2 and 3 manifest the main distinction of a ResNet from other types of neural networks that do not have shortcut connections.
\end{rmk-3}

The corollary that follows is also the characteristic of feedforward neural networks without shortcut connections \citep*{Huang2022}, which suggests the common feature of ResNets with other network architectures.
\begin{cl}[Effect of deep layers]
Each layer of a ResNet could preserve the classification results of the preceding layers, and the combination of the effects of all the layers can lead to the disentangling of the input data set.
\end{cl}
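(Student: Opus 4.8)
The plan is to derive both assertions directly from the explicit construction of section 3, so that Theorem 1 together with Lemmas 1 and 2 serves as the existence witness. I would split the statement into its two clauses and treat each separately, reusing the per-block behaviour established earlier rather than building anything new.

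For the preservation clause, I would start from Lemma 1. Recall that when a simplest block processes a linearly separable pair of categories, the retained category is transmitted as the affine image $\boldsymbol{x} + \boldsymbol{b}$ (the positive bias guaranteeing that the ReLU does not clip it), while the excluded category is collapsed to the zero vector. The key observation is that with the identity shortcut matrix this map is a translation, hence a bijection preserving collinearity, convexity and linear separability; consequently any separation already achieved among the retained points in earlier layers survives intact in the image. A block that is not meant to perform a new classification can be taken to be a redundant block in the sense of Lemma 2, i.e.\ one whose gate outputs vanish on all current data so that it acts as a pure affine carrier. Chaining such blocks shows that the classification state reached at any layer is propagated unchanged to the next, which is exactly the first clause.

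For the combination clause, I would appeal to Propositions 1 and 2 and Theorem 1. There the data set is disentangled by recursive binary classifications, each realized by one block (one layer) whose partial result is preserved forward by the mechanism just described; combining these per-layer effects yields a final layer in which every category is linearly separated from the rest, i.e.\ the input set is disentangled. Thus the accumulated action of all layers produces the disentangling asserted in the second clause.

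The step I expect to be delicate is making the phrase ``preserve the classification results'' precise and verifying it for the \emph{excluded} category rather than the retained one. Once a category has been collapsed to the zero vector, one must check that the subsequent bias additions keep it separated from the affinely transmitted categories and never reactivate it; this requires choosing the biases so that the zero-collapsed cluster and the shifted clusters remain on opposite sides of the relevant hyperplanes throughout the remaining depth. This bookkeeping, together with a formal definition of a preserved classification as the invariance of the induced partition into activated and zeroed regions, is where the real work lies; the rest reduces to the affine-invariance argument above.
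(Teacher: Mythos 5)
Your proposal is correct and is essentially the paper's own justification: the paper explicitly omits proofs for the corollaries of section 5, stating that the constructed solution of section 3 (Lemma 1's affine transmission of the retained category and zeroing of the excluded one, the redundant blocks of Lemma 2, and the recursive per-layer classifications of Propositions 1--2 and Theorem 1) serves as the existence proof, which is precisely the derivation you spell out. The delicate point you flag --- ensuring the zero-collapsed cluster is treated as excluded by all subsequent gate networks so the positive biases never reactivate it --- is a detail the paper glosses over, and your bias bookkeeping is the right way to close it within the same construction.
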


\begin{rmk-4}
The four corollaries of this section provide a general interpretation of the mechanism of ResNets.
\end{rmk-4}

\begin{rmk-4}
Note that the ResNet solution of \citet*{Huang2020} discussed in section 4 also obeys the rules of the four corollaries.
\end{rmk-4}

\section{Miscellaneous Solutions}
Although the four corollaries of section 5 stem from a concrete-solution construction, their generality can be further demonstrated by more examples, from which we can assess to what extent they may be related to the solution of engineering.

The gate network of a ResNet block can be an arbitrary neural network, including any feedforward one that could classify a multi-category data set. Thus, theoretically, one ResNet block is enough to do any classification, provided that its gate network is complex enough.

In \citet*{He2016a}, some gate networks (such as Figures 2 and 3) have two layers, corresponding to a three-layer network. We know that a three-layer network could classify any multi-category data set \citep*{Huang2022}, provided that the number of the units of the hidden layer is sufficiently large. Thus, this type of gate network of \citet*{He2016a} is capable of controlling the activation of a unit for any multi-category data set.

From the above viewpoint, the architecture design of a ResNet could be very flexible, and arbitrary depth could be set to achieve the goal of classifications or interpolations. However, to different kinds of gate networks, the difficulty of finding a solution by the training may be different, which is related to the usefulness of deep layers as mentioned in corollary 4.

\section{Summary}
We provided a novel perspective to interpret the mechanism of ResNets inspired by the gate-control idea of LSTM. The universality of our general conclusions may need more evidences, and their concrete manifestation may be diverse, which are to be studied in future. Since ResNets have been widely applied in engineering and science, its interpretation is crucial. We hope that the results of this paper could contribute to the understanding of ResNets.

\end{document}